\DeclareMathOperator*{\argmax}{arg\,max}
\DeclareMathOperator*{\argmin}{arg\,min}
\newtheorem{lemma}{Lemma}[section]
\newtheorem{prop}[lemma]{Proposition}
\newtheorem{theorem}[lemma]{Theorem}
\newtheorem{cor}[lemma]{Corollary}
\theoremstyle{definition}
\theoremstyle{remark}
\newtheorem{remark}[lemma]{Remark}
\newcommand{\R}{\mathbb{R}}
\title{Gaussian Mixture Model Decomposition of Multivariate Signals}
\author{
	Gustav Zickert\thanks{Corresponding author.}\\
	Department of Mathematics\\
	KTH Royal Institute of Technology\\
	\texttt{zickert@kth.se} \vspace{10pt} \\ 
	Can Evren Yarman \\
	Schlumberger Cambridge Research \\
	\texttt{cyarman@slb.com} \\
}
\begin{document}
	
	\maketitle
	
	\begin{abstract}
		We propose a greedy variational method for decomposing a non-negative multivariate signal as a weighted sum of Gaussians, which, borrowing the terminology from statistics, we refer to as a Gaussian mixture model. Notably, our method has the following features: (1) It accepts multivariate signals, i.e. sampled multivariate functions, histograms, time series, images, etc. as input. (2) The method can handle general (i.e. ellipsoidal) Gaussians. (3) No prior assumption on the number of mixture components is needed. To the best of our knowledge, no previous method for Gaussian mixture model decomposition simultaneously enjoys all these features. We also prove an upper bound, which cannot be improved by a global constant, for the distance from any mode of a Gaussian mixture model to the set of corresponding means. For mixtures of spherical Gaussians with common variance $\sigma^2$, the bound takes the simple form $\sqrt{n}\sigma$. We evaluate our method on one- and two-dimensional signals. Finally, we discuss the relation between clustering and signal decomposition, and compare our method to the baseline expectation maximization algorithm.
	\end{abstract}

\section{Introduction}
	Mixtures of Gaussians are often used in clustering to fit a probability distribution to some given sample points. In this work we are concerned with the related problem of approximating a non-negative but otherwise arbitrary signal by a sparse linear combination of potentially anisotropic Gaussians. Our interest in this problem stems mainly from its applications in transmission electron microscopy (TEM). The output of the TEM reconstruction procedure is a 3D voxelized structure that ideally represents the electrostatic potential of the imaged specimen. Via a process known in the TEM-community as \emph{coarse-graining}, it is common to express this 3D structure as a linear combination of Gaussians \cite{Kaw18, Jou16, JS16a, JS16b}. This speeds up post-processing tasks such as fitting an atomic model to the structure \cite{Kaw18}, but one can also use coarse-graining as a form of denoising \cite{JS16b}.
	
	Methods for sparse decomposition of multivariate signals as GMMs may be considered in two classes. The first class contains methods based on the expectation maximization algorithm that is commonly used for fitting a GMM to a point-cloud, and adapt it to input data in the form of multivariate signals \cite{Kaw18, Jou16}. The second class is the class of greedy variational methods \cite{FC99, JS16a, KBGP18}. The proposed method, which belongs to the latter class, is similar to \--- and is inspired by \--- both \cite{KBGP18} and \cite{JS16a}. It is a continuously parameterized analogue of orthogonal matching pursuit where at each iteration the $L^2$-norm of the error is non-increasing. The significance of the proposed method, and what distinguishes it from earlier work on GMM decomposition of signals, is that
	\begin{enumerate}
		\item The resulting GMM may contain ellipsoidal Gaussians. This allows for a sparser representation than what could be achieved with a GMM only consisting of spherical Gaussians. 
		\item The number of Gaussians does not need to be set before-hand.
	\end{enumerate}
	We are not aware of previous methods for GMM decomposition of signals that enjoys both of these properties at the same time.  
	 
	We complement our algorithm with a theorem (Theorem \ref{thm:modeDist}) that upper-bounds the distance from a local maximum of a GMM to the set of mean vectors. This provides theoretical support for our initialization of each new mean vector at a maximum of the residual. We remark that Theorem \ref{thm:modeDist} could also be of interest in its own right; whereas the number of modes of Gaussian mixtures have been investigated previously \cite{CP00, amendola2017maximum}, the authors of this paper are not aware of any existing quantitative bounds on the distance from a mode of a GMM to its mean vectors in the multivariate setting.
	
	The rest of this paper is organized as follows. In section \ref{sec:probStatement} we define the GMM decomposition problem. Section \ref{sec:method} contains a description of the proposed algorithm together with numerical examples. Section \ref{sec:theory} is devoted to theoretical questions, in particular we state and prove the afore-mentioned upper bound. Finally, in section \ref{sec:conc} we provide a conclusion.
	
	\section{Problem statement}\label{sec:probStatement}
		For $x_0\in \R^n$ and $\Sigma\in \R^{n\times n}$ symmetric non-negative definite we define $g(x_0, \Sigma)$ as the Gaussian density in $n$ dimensions with mean vector $x_0$ and covariance matrix $\Sigma^{2}$, i.e.
		\begin{align}
		g(x_0,\Sigma)(x) := C_{\Sigma}\exp\left\lbrace-\frac{1}{2}\left(x-x_0\right)^T\Sigma^{-2}\left(x-x_0\right)\right\rbrace,
		\end{align}
		where $C_{\Sigma}$ is a normalizing factor, ensuring that $\left\lvert g(x_0,\Sigma)\right\rvert_1 =1$.
		Further, by a Gaussian mixture model (GMM) we mean a linear combination of the form\footnote{We do not require the GMM to be normalized, i.e. we do not require that $\sum_{m=1}^Ma_m=1$.}
		\begin{align}
		\sum_{m=1}^Ma_mg(x_m,\Sigma_m), \quad a_m>0, M>0.
		\end{align}
		
		The problem we consider is to construct an algorithm with the following properties. Given input in the form of a non-negative signal $d\in\R^{k_1\times\dots\times k_n}$, where $k_i$ is the number of grid-points along the $i$:th variable, the output should be a list of GMM parameters, i.e. it is a list $(a^*_m, x_m^*, \Sigma^*_m)_{m=1}^M$ of weights, mean vectors and square roots of covariance matrices. The output should be such that 
		\begin{enumerate}
			\item the residual \begin{align}
			r:=d- \sum_{m=1}^Ma^*_mg(x^*_m,\Sigma^*_m) 
			\end{align}
			has a small $L^2$-norm. (We remark that any sufficiently regular non-negative function can be uniformly approximated arbitrarily well using GMMs; see section \ref{sec:theory}.) 
			\item the approximation is sparse, i.e. the number of Gaussians $M$ in the sum should ideally be as small as possible given the $L^2$-norm of the residual.
		\end{enumerate}

	\section{Proposed method}\label{sec:method}
	In each iteration of our algorithm, a new Gaussian is added to the GMM by a procedure that corresponds to one iteration of a continuously parametrized version of matching pursuit (MP), c.f. \cite{mallat1993matching}: The starting guess $x_0$ for the mean vector is defined to be a global maximum of $r'$, which is a smoothed version of the current residual $r$.  Likewise $\Sigma_0$ is set to be a square root of the matrix of second-order moments of $\frac{r''}{|r''|_1}$, where $r''$ is $r$ restricted to a neighborhood of $x_0$. The initial weight $a_0$ is given by projecting the Gaussian atom defined by $(x_0, \Sigma_0)$ onto the residual, i.e. it is given as the global minimum of the convex objective $a\to \lvert r-ag(x_0, \Sigma_0)\rvert_2^2, a\geq 0$.
	
	We found that this vanilla MP-type of approach was by itself not capable of producing a good approximation. Because of this, in our method the already obtained Gaussians are updated  in a way that bears some resemblance to the projection step of orthogonal MP (OMP) \cite{pati1993orthogonal}: Starting from $(a_0, x_0, \Sigma_0)$, the parameters defining the most recently added Gaussian are updated by minimizing the $L^2$-norm of the residual.\footnote{This step is not crucial for the performance of the algorithm, but was empirically found to improve the final data-fit.} The final step of an outer iteration is to simultaneously adjust all Gaussians in the current GMM, again by minimizing the $L^2$-norm of the residual. We propose to use the L-BFGS-B\cite{byrd1995limited} method with non-negativity constraints on the weights for all of the three afore-mentioned minimization problems. Our algorithm runs until some user-specified stopping criterion is met and is summarized in pseudo-code in Algorithm~\ref{alg}.
	
	We now assess the space- and time complexity of the proposed method in terms of the final number $M$ of Gaussians used and the dimension $n$ of the input signal. Let $N_{i}$ denote the number of parameters of $i$ Gaussians, so $N_i = \mathcal{O}(in^2)$. The most demanding step of a single outer iteration of the proposed algorithm is the simultaneous adjustment of all Gaussians. If this step is done using L-BFGS-B, this amounts to a cost of $\mathcal{O}(N_i)$, both in terms of space and time \cite{byrd1995limited}. Summing over the $M$ iterations we hence find the space-- and time requirements of the proposed method to be $\mathcal{O}(M^2n^2)$. 

	\begin{algorithm}
		\caption{}\label{alg}
		\begin{algorithmic}[1]
			\Procedure{GMM decomposition}{}\newline
			\textbf{Input:} Signal $d$, hyper-parameters $\tau_1, \tau_2\in \mathbb{Z}_+$\newline%, accuracy $\epsilon>0$\newline 
			\textbf{Output:} GMM parameters $\left(a^*_m, x^*_m, \Sigma^*_m\right)_{m=1}^M$
			\State{$M \gets 0$, $r\gets d$}
			\While{stopping criterion is not met}
			\State{$M\gets M+1$}
			\State{$ r'(y) \gets \frac{1}{\tau_1} \sum_{z\in A(y)}r(z)$,\par  \hskip\algorithmicindent\quad $A(y) := \{\tau_1$ grid points closest to $y\}$}
			\State{$x_0 \gets \argmax_{y} r'(y)$}
			\State{$r'' \gets $ restriction of $r$ to the $\tau_2$ grid points \par  \hskip\algorithmicindent\quad closest to $x_0$}
			%		\State{$m_0 \gets \argmax_x r_\text{smooth}(x)$}
			\State{$\Sigma_0 \gets $ square root of matrix of second order \par  \hskip\algorithmicindent\quad moments of $\frac{r''}{|r''|_1}$}
			\State{$a_0 \gets \argmin_{a\geq0} \lvert r-ag(x_0, \Sigma_0)\rvert_2^2$}
		\State{$\left(a^*, x^*, \Sigma^*\right) \gets \underbrace{\text{loc. min.} \left\lvert r-ag(x, \Sigma)\right\rvert_2^2, a\geq0}_{\text{initialized at }a_0, x_0, \Sigma_0}$}
			\State{$\left(a^*_M,x^*_M,\Sigma^*_M\right) \gets \left(a^*,x^*,\Sigma^*\right)$}
			\State{$\left(a^{**}_m, x^{**}_m, \Sigma^{**}_m\right)_{m=1}^M \gets \left(a^*_m, x^*_m, \Sigma^*_m\right)_{m=1}^M$}
			\State{$\left(a^*_m, x^*_m, \Sigma^*_m\right)_{m=1}^M \gets$ \par  \hskip\algorithmicindent\quad $\underbrace{\text{loc. min.} \left\lvert d-\sum_{m=1}^M a_mg(x_m, \Sigma_m)\right\rvert_2^2, a_m\geq0}_{\text{initialized at } \left(a^{**}_m, x^{**}_m, \Sigma^{**}_m\right)_{m=1}^M}$ }
			\State{$r \gets d-\sum_{m=1}^M a^*_mg(x^*_m, \Sigma^*_m)$}
			%		\State{$r_\text{smooth} \gets k*r$}
			\EndWhile
			\EndProcedure
		\end{algorithmic}
	\end{algorithm}
	
	\subsection{Main numerical results}\label{sec:mainNumerical}
	\newcommand{\ra}[1]{\renewcommand{\arraystretch}{#1}}	
	
		\begin{table}\centering
			\ra{1.3}
			\caption{Input and output GMM parameters for Exp. 1} \label{tab:Exp1}
			\begin{tabular}{@{}lllclll@{}}\toprule
				\multicolumn{3}{c}{Input} & \phantom{a}&  \multicolumn{3}{c}{Output} \\
				\midrule 
				$a_m$ & $x_m$ & $\Sigma^2_m$ && $a_m$ & $x_m$ & $\Sigma^2_m$    \\
				\hline 
				$1$ & $0$  & $1$ && $5.0272$ & $1.3052$  & $2.2632$ \\ 
				$8$ & $0$  & $4$ && $1.0116$ & $-7.9756$ & $0.9512$ \\
				$1$ & $-2$ & $1$ && $1.0351$ & $8.0478$  & $1.1041$ \\
				$1$ & $2$  & $1$ && $5.8708$ & $-1.1215$ & $2.4119$ \\
				$1$ & $-8$ & $1$ && & & \\
				$1$ & $8$  & $1$ && & &  \\
				\bottomrule
			\end{tabular}
		\end{table}

		\begin{table}\centering
			\ra{1.3}
			\caption{Input and output GMM parameters for Exp. 2} \label{tab:Exp2}
			\begin{tabular}{@{}lll@{}}\toprule
				\multicolumn{3}{c}{Input} \\% & \phantom{a}&   
				\midrule 
				$a_m$ & $x_m$ & $\Sigma^2_m$    \\
				\hline 
				$2$ & $(-1.5, -2.5981)$ & $\begin{psmallmatrix}
				0.7969 & 1.272 \\
				1.272 & 2.2656
				\end{psmallmatrix}$  \\ 
				$2$ & $(-1.5, 2.5981)$ & $\begin{psmallmatrix}
				0.7969 & -1.272 \\
				-1.272 & 2.2656
				\end{psmallmatrix}$  \\
				$2$ & $(3, 0)$ & $\begin{psmallmatrix}
				3 & 0 \\
				0 & 0.0625
				\end{psmallmatrix}$  \\
				$1$ & $(-1.75, -3.0311)$ &$\begin{psmallmatrix}
				1 & 0 \\
				0 & 1
				\end{psmallmatrix}$  \\
				\toprule
				\multicolumn{3}{c}{Output} \\
			    \midrule
				$a_m$ & $x_m$ & $\Sigma^2_m$ \\ 
				\hline
				 $1.9833$ & $(-1.4937, -2.5862)$ & $\begin{psmallmatrix}
				0.5758 & 1.1181 \\
				1.1181 & 2.4908
				\end{psmallmatrix}$ \\
				
				$1.9901$ & $(3.0075, -9.7258\times 10^{-4})$ & $\begin{psmallmatrix}
				2.982 & 0.0082\\
				0.0082 & 0.0623  
				\end{psmallmatrix}$ \\
				
				$2.001$ & $(-1.4994,  2.5924)$ & $\begin{psmallmatrix}
				0.7858 &-1.2663 \\
				-1.2663  &2.2805 
				\end{psmallmatrix}$  \\
				
				$1.0101$ & $(-1.7466, -3.038)$ &$\begin{psmallmatrix}
				0.9878& 0.0126 \\
				0.0126 &0.9731 
				\end{psmallmatrix}$  \\
				\bottomrule
			\end{tabular}
		\end{table}

		\begin{table}\centering
			\ra{1.3}
			\caption{Input and output GMM parameters for Exp. 3} \label{tab:Exp3}
			\begin{tabular}{@{}lll@{}}\toprule
				\multicolumn{3}{c}{Input} \\% & \phantom{a}&   
				\midrule 
				$a_m$ & $x_m$ & $\Sigma^2_m$  \\
				\hline 
				$5$ & $(-5, 5)$  & $\text{Id}_2$ \\ 
			    $1$ & $(5, -5)$  & $\text{Id}_2$ \\
				$3$ & $(5, 5)$   & $\text{Id}_2$ \\
				$4$ & $(-5, -5)$ & $\text{Id}_2$ \\
				$5$ & $(-2, 0)$  & $\text{Id}_2$ \\
				$5$ & $(0, -2)$  & $\text{Id}_2$ \\
				$5$ & $(2, 0)$   & $\text{Id}_2$ \\
				$5$ & $(0, 2)$   & $\text{Id}_2$ \\
				\toprule
				\multicolumn{3}{c}{Output} \\
				\midrule
				$a_m$ & $x_m$ & $\Sigma^2_m$  \\
				\hline 
				$4.9685$ & $(9.7331\times 10^{-4}, 2.0079)$ & $\begin{psmallmatrix}
				0.9781 &0.0125 \\
				0.0125 &1.012  
				\end{psmallmatrix}$ \\
				
				$4.9517$ & $(-4.9988,  5.0024)$ & $\begin{psmallmatrix}
				1.0006 &0.0127 \\
				0.0127 &1.0103 
				\end{psmallmatrix}$ \\
				
				$3.9821$ & $(-5.005 , -4.9978)$ & $\begin{psmallmatrix}
				0.9919 &0.0017 \\
				0.0017 &0.9731 
				\end{psmallmatrix}$ \\
				
				$2.9853$ & $(4.9935, 5.0071)$ & $\begin{psmallmatrix}
				0.9955 &-0.0308 \\
				-0.0308  &0.9976 
				\end{psmallmatrix}$ \\
				
				$5.0608$ & $(-1.9916,  0.0076)$ & $\begin{psmallmatrix}
				1.0128 &0.0154 \\
				0.0154 &0.9977 
				\end{psmallmatrix}$ \\
				
				$5.0322$ & $(0.0034, -1.9952)$ & $\begin{psmallmatrix}
				0.9968& 0.0145 \\
				0.0145 &1.0038 
				\end{psmallmatrix}$ \\
				
				$1.0048$ & $(5.0158, -5.0174)$ & $\begin{psmallmatrix}
				1.007 & 0.0083 \\
				0.0083 &1.0034 
				\end{psmallmatrix}$ \\
				
				$4.9247$ & $(2.0213, 0.0077)$ & $\begin{psmallmatrix}
				0.9743& -0.0097 \\
				-0.0097 & 0.9997  
				\end{psmallmatrix}$ \\
				
				\bottomrule
			\end{tabular}
		\end{table}

	As proof of concept for the proposed method, we ran our algorithm on small toy-examples in 1D and 2D. We refer to the examples as Experiment 1 \--- Experiment 3, see Figures  \ref{fig:1d_exa}\---\ref{fig:2d_exa_8_circs}. A clean signal $d_\text{clean}$ was generated by discretizing GMMs with parameters as in Table \ref{tab:Exp1} \--- \ref{tab:Exp3} to the grids $\left\{y_k = -10 + \frac{20k}{1000}\right\}_{k=0}^{1000}$ and $\left\{y_{k,\ell} = \left(-10 + \frac{20k}{65}, -10 + \frac{20\ell}{65}\right)\right\}_{k,\ell=0}^{64}$ in the one- and two-dimensional experiments respectively. From this signal, noisy data was then generated as $d = d_\text{clean} + \epsilon$ where $\epsilon$ denotes white Gaussian noise with standard deviation $\sigma_\text{noise}$. The latter was chosen so that SNR = 20, where we define SNR by:
	\begin{align}
	\text{SNR} &:= 10\log_{10}\frac{\text{Var}\left(d_\text{clean}\right)}{\sigma^2_\text{noise}}.
	\end{align}
	As stopping criterion we used $\text{SNR}_\text{stop} \geq 20$, where 
	\begin{align}
	\text{SNR}_\text{stop} &:= 10\log_{10}\frac{\text{Var}\left(d_\text{est}\right)}{\text{Var}\left(d-d_\text{est}\right)},
	\end{align}
	and where $d_{est} := \sum_ma_m^*g(x_m^*,\Sigma_m^*)$. In all minimization sub-procedures we used the L-BFGS-B method with non-negativity constraints on the weights and ran it until convergence. Hyper-parameters $\tau_1$ and $\tau_2$ (introduced in line 5 and 7 in Algorithm 1) should ideally be chosen based on the amount of noise in the data, however we found that the exact values were not so important in our toy examples, and we used the ad-hoc chosen values $\tau_1=10$ and $\tau_2=20$. We leave a more careful sensitivity analysis with respect to these parameter to future studies. The total run-time was a few minutes on a computer with an Intel Pentium CPU running at 2.90GHz, using $\sim 8$ GB of RAM. The final results of our experiments are tabulated in Table \ref{tab:Exp1} \--- \ref{tab:Exp3} and are plotted in Figure \ref{fig:1d_exa}\---\ref{fig:2d_exa_8_circs}. Intermediate results for the 2D experiments are plotted in Figure \ref{fig:2d_exa_elongated_iters} \--- \ref{fig:2d_exa_8_circs_iters}.

	\begin{figure}
		\centering
		\includegraphics[width=1.0\linewidth]{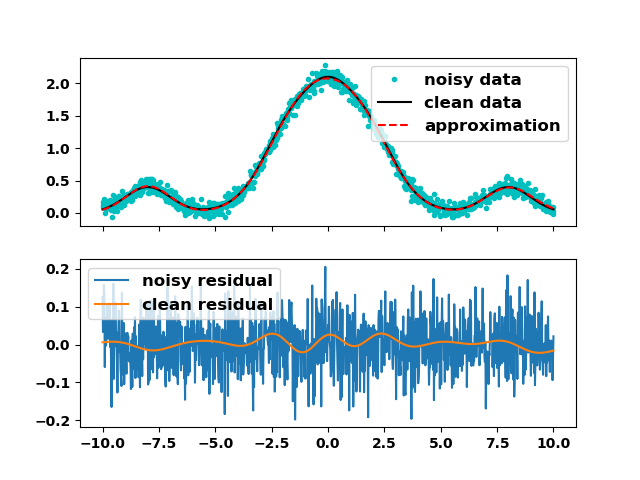}
		\\ 
		\caption{Results from Experiment 1.}\label{fig:1d_exa}
	\end{figure}

	\begin{figure}
		\centering
		\includegraphics[width=0.45\linewidth]{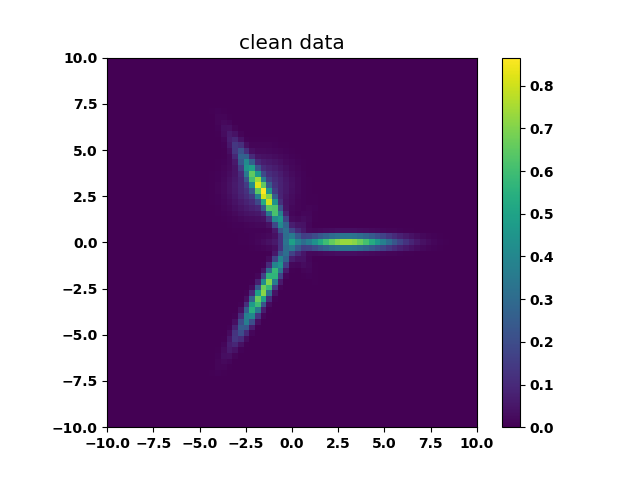}	 
	    \includegraphics[width=0.45\linewidth]{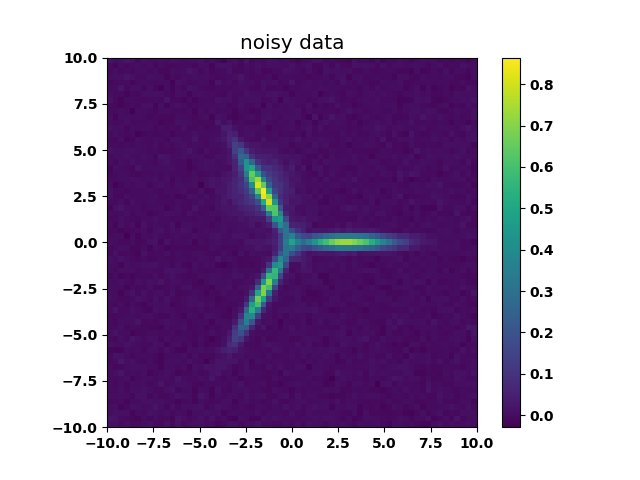}
		\includegraphics[width=0.45\linewidth]{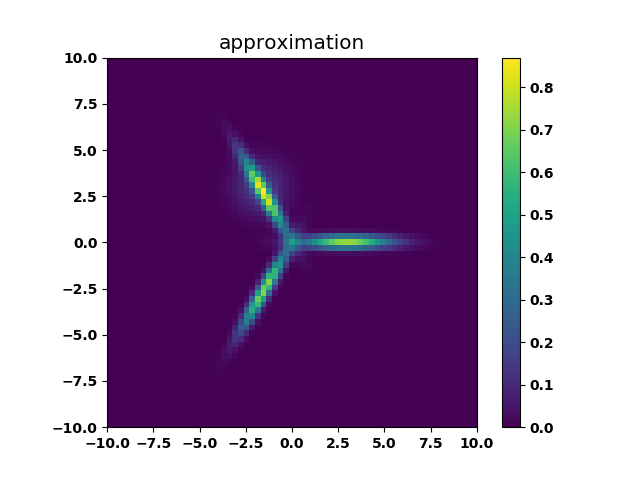}
		\includegraphics[width=0.45\linewidth]{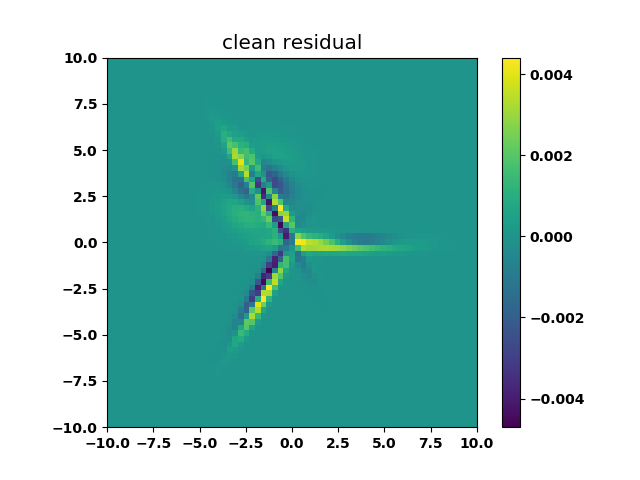}
		\includegraphics[width=0.45\linewidth]{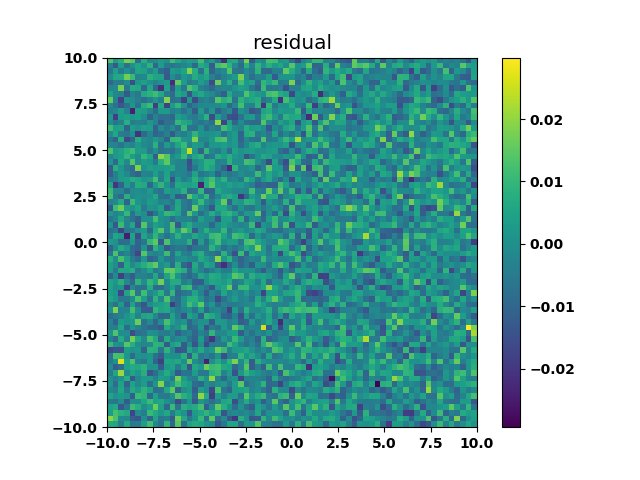}
		\caption{Results from Experiment 2. Note the faint spherical Gaussian on the upper left "leg".}\label{fig:2d_exa_elongated}
	\end{figure}
	\begin{figure}
		\centering
		\includegraphics[width=0.45\linewidth]{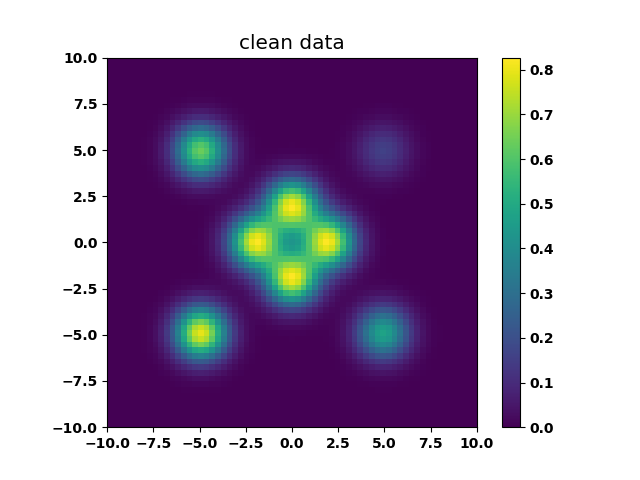}	 
		\includegraphics[width=0.45\linewidth]{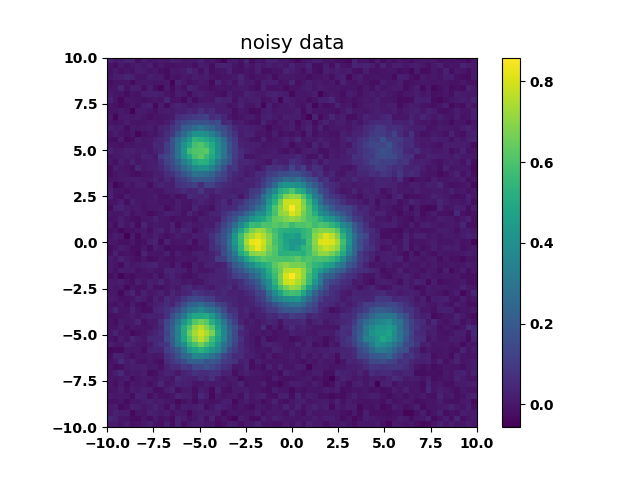}
		\includegraphics[width=0.45\linewidth]{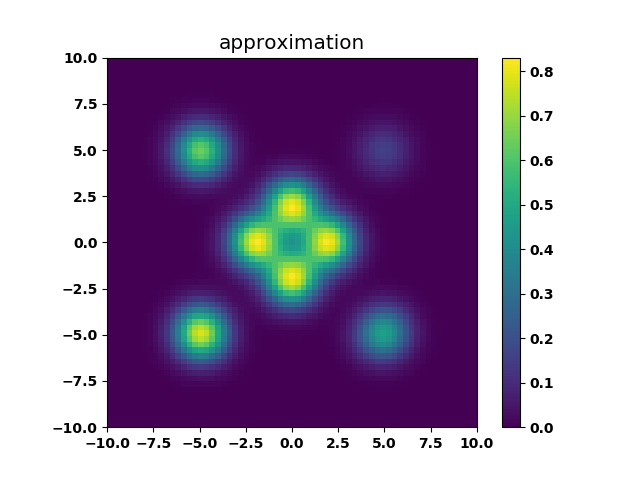}
		\includegraphics[width=0.45\linewidth]{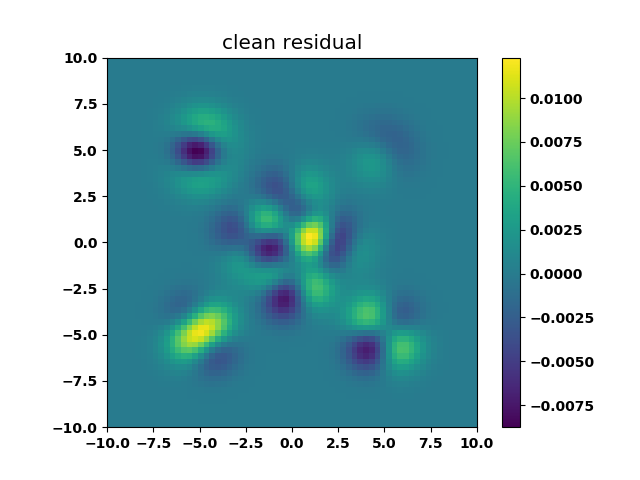}
		\includegraphics[width=0.45\linewidth]{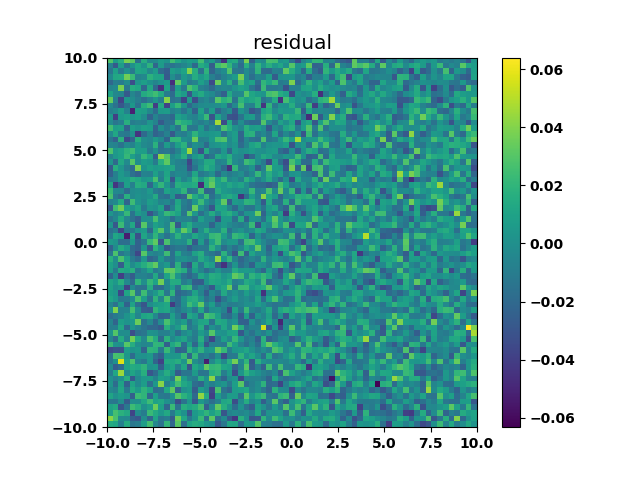}
		\caption{Results from Experiment 3.}\label{fig:2d_exa_8_circs}
	\end{figure}
	
	\begin{figure}
		\centering
		\includegraphics[width=0.45\linewidth]{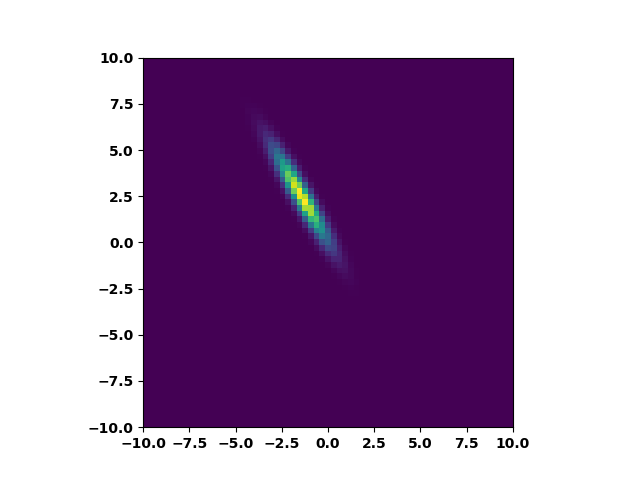}
		\includegraphics[width=0.45\linewidth]{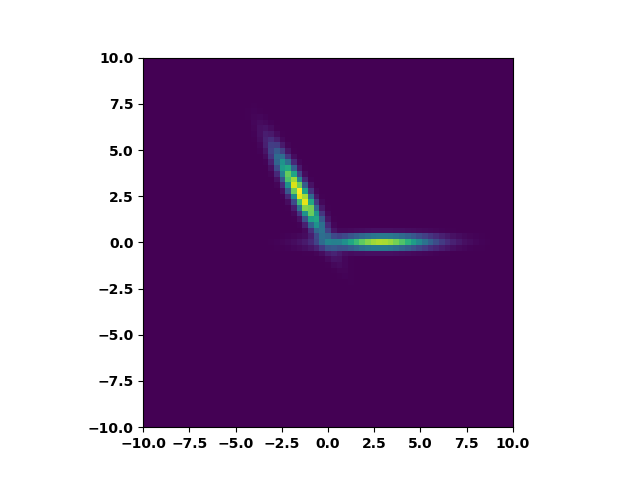}
		\includegraphics[width=0.45\linewidth]{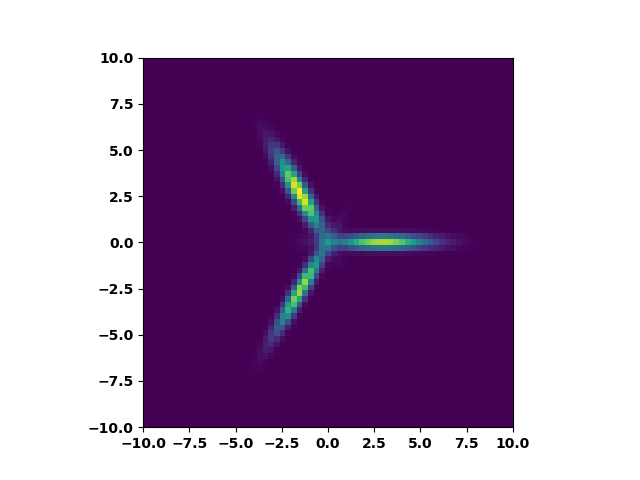}
		\includegraphics[width=0.45\linewidth]{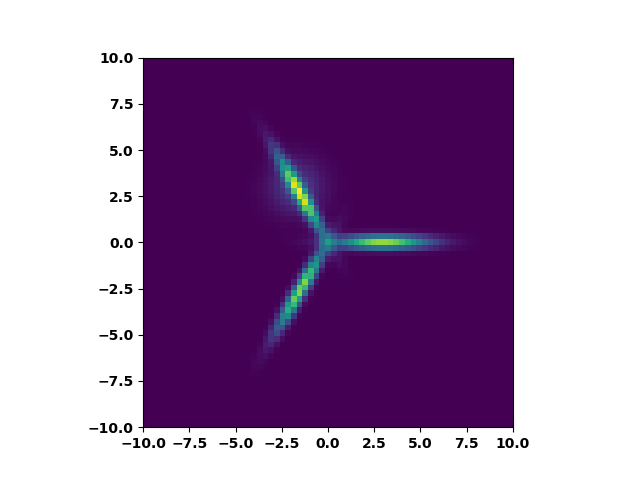}
		\caption{Iterations from Experiment 2.}\label{fig:2d_exa_elongated_iters}
	\end{figure}
	
	\begin{figure}
		\centering
		\includegraphics[width=0.45\linewidth]{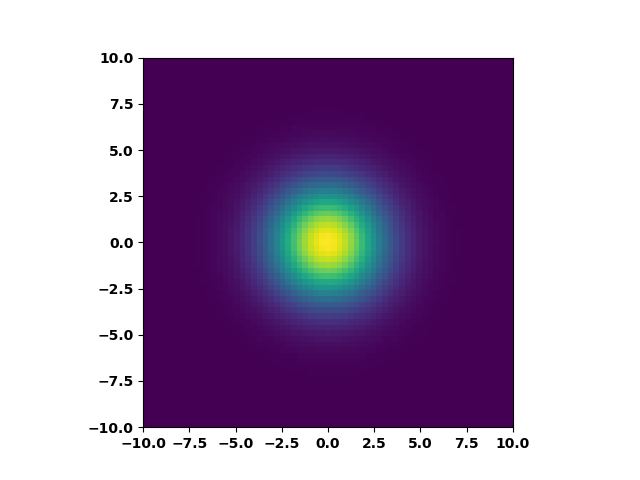}
		\includegraphics[width=0.45\linewidth]{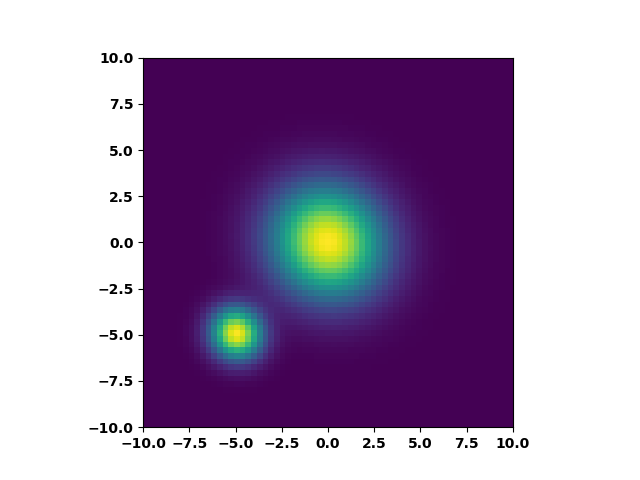}
		\includegraphics[width=0.45\linewidth]{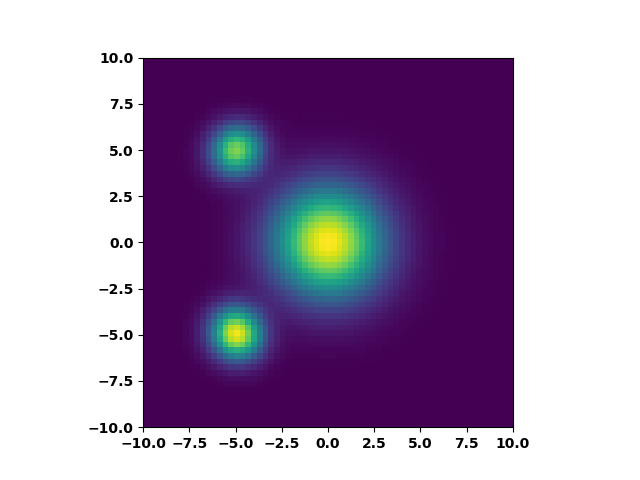}
		\includegraphics[width=0.45\linewidth]{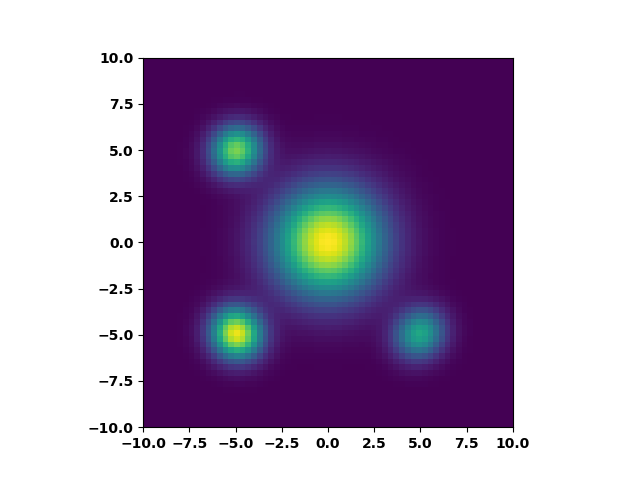}
		\includegraphics[width=0.45\linewidth]{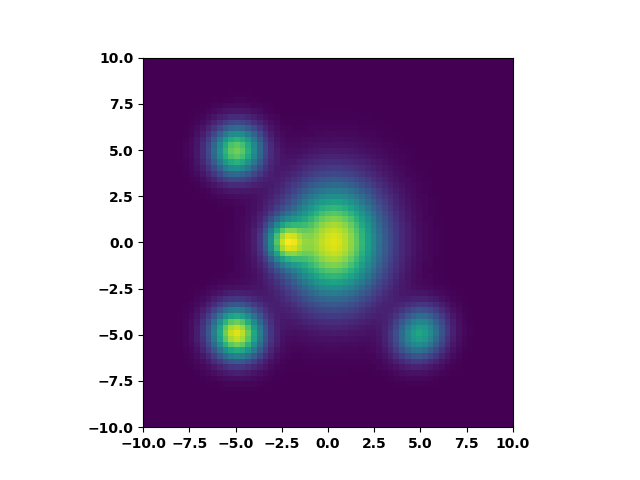}
		\includegraphics[width=0.45\linewidth]{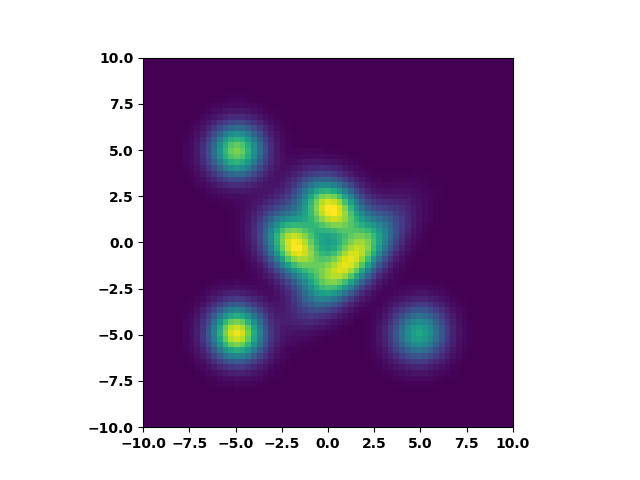}
		\includegraphics[width=0.45\linewidth]{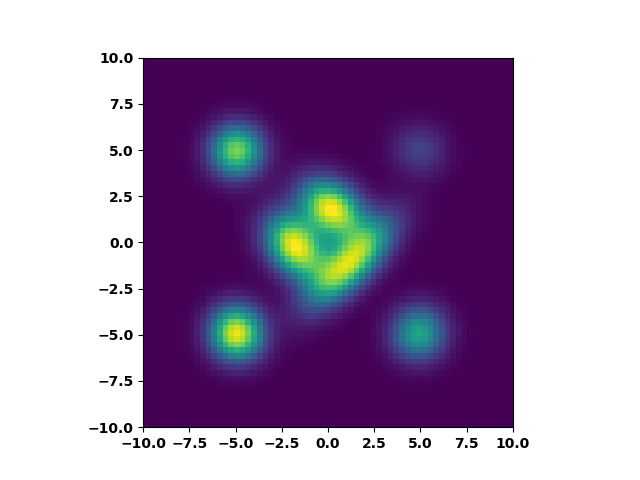}
		\includegraphics[width=0.45\linewidth]{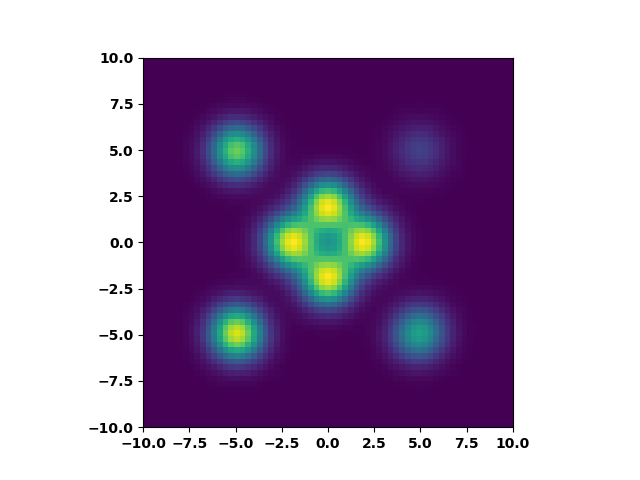}
		\caption{Iterations from Experiment 3.}\label{fig:2d_exa_8_circs_iters}
	\end{figure}

	\begin{remark}
		The number of modes of a GMM is not necessarily equal to the number of means. Indeed, there might be more modes than means \cite{amendola2017maximum}. Alternatively, there could be fewer modes than means, for example, in Experiment 2, a low amplitude Gaussian and high amplitude Gaussian are close to each other, and there is only one mode within the vicinity of both means.  Our method successfully recovered the parameters of both Gaussians. There is also a mode at the center dominantly formed by the superposition of the tails of three anisotropic Gaussians, where our algorithm did not introduce a spurious Gaussian. Similar performance is observed in Experiment 3. We attribute this to the greedy nature of the algorithm where a predefined number of Gaussians is not enforced in the decomposition.
	\end{remark}
	
	\subsection{Clustering and comparison to expectation maximization}
As one of the main techniques in unsupervised learning, clustering seeks to subdivide data into groups based on some similarity measure. A common approach in clustering is based on Gaussian mixtures. In this approach one is given a point-cloud $P\subset\R^n$ and the goal is to find a GMM $f$ such that the probability that $P$ is a sample from $f$ is maximal among all GMMs with a prescribed number of components. 

There are many ways of transforming a point-cloud into a signal and vice-versa. Thus one may use the method proposed in this paper for GMM-based clustering, and conversely one may use methods from clustering in order to decompose a signal into a GMM. In the below we illustrate both directions of this "problem transformation" with some numerical examples. 

\subsubsection{Clustering via signal decomposition}
A sample point-cloud $P$ of size $10^5$ was drawn from a normalized version of the GMM in Experiment 3 above. Then a signal $d$ was generated from $P$ by computing a histogram based on the 2D-grid used in the previous experiments. Results from the expectation maximization (EM) algorithm and the proposed method applied to $P$ and $d$, respectively, are shown in Figure \ref{fig:clust_via_tensor_decomp}. The two methods performed similarly in terms of the likelihood of $P$ given the computed GMMs.
 %	Thus $d(y) = \lbrace\# \text{points from P in \rbrace$
 	\begin{figure}
 		\centering
 		\includegraphics[width=0.70\linewidth]{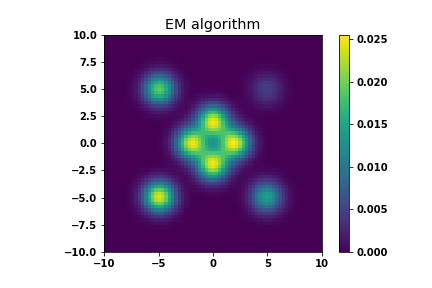}	
 		\includegraphics[width=0.70\linewidth]{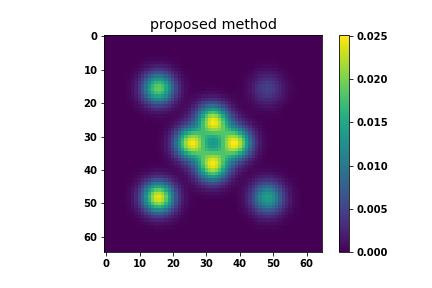}
 		\caption{Results from clustering via signal decomposition.}\label{fig:clust_via_tensor_decomp}
 	\end{figure}
\subsubsection{Signal decomposition via clustering}
A noisy signal $d$ was generated as in Experiment 3 described in section \ref{sec:mainNumerical}, except that the GMM was normalized before discretization. To $d$ we then associated a point-cloud $P$ following the methodology in \cite{Jou16}, so $P$ had $\left \lfloor{Cd(y)}\right \rfloor$ points at grid-point $y$, where the constant $C := 10^{-5}\sum_yd(y)$ was chosen so that $P$ had roughly $10^5$ points in total. Using $P$ we estimated a GMM with the EM algorithm. The performance of EM depended on the random initialization. A typical result using EM applied to $P$, along with the result from the proposed method applied to $d$ are shown in Figure \ref{fig:tensor_decomp_via_clust}. We remark that is possible that a better result with EM may be obtained using some other procedure for generating the point-cloud from the given signal.

	\begin{figure}
	\centering
	\includegraphics[width=0.70\linewidth]{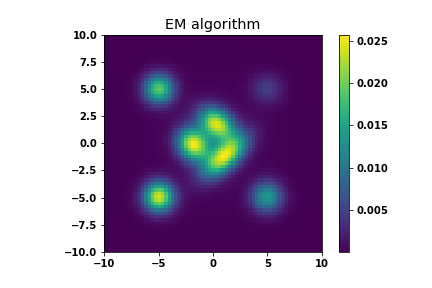}	
	\includegraphics[width=0.70\linewidth]{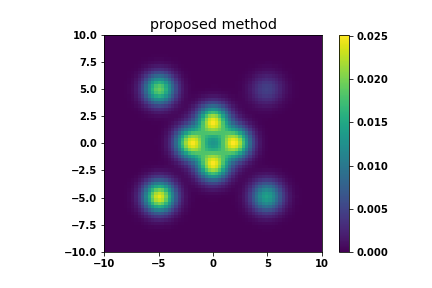}
	\caption{Results from signal decomposition via clustering.}\label{fig:tensor_decomp_via_clust}
\end{figure}

	\section{Theoretical considerations}\label{sec:theory}
		The main theoretical contribution of our paper is Theorem 3.4, which says that a mode of a GMM can not lie too far away from the set of means. This theorem generalizes the 1D result that any mode will lie within one standard deviation away from some mean, and provides theoretical support for our way of initializing each iteration of our method. Before turning to this theorem, we need three lemmas. In the first two lemmas we compute some integrals over the $n$-sphere of certain polynomials, and in the third lemma we provide expressions for the gradient and Hessian of a GMM. 	We also prove, in Proposition \ref{prop:GaussApproximation}, that any sufficiently nice non-negative function may be approximated in $L^\infty$ to any desired accuracy by a GMM.
	\subsection{Location of modes of Gaussian mixtures}		

	\begin{lemma}\label{lem:IntInner}
		\begin{align}
		\int_{S^{n-1}}(x, y)^k dy = |x|^kC_k,\quad x\in\R^n, k\in\mathbb{N}, 
		\end{align}
		where $C_k$ is a constant only depending on $k$.
	\end{lemma}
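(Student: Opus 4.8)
The plan is to reduce the integral to a single fixed reference direction by exploiting the rotational invariance of the surface measure on $S^{n-1}$; the entire dependence on $x$ is then isolated through its magnitude. First I would extract the homogeneity. Assuming $x\neq 0$, set $u := x/|x|$, a unit vector. By bilinearity of the inner product we have $(x,y) = |x|\,(u,y)$, hence $(x,y)^k = |x|^k (u,y)^k$ and
\begin{align}
\int_{S^{n-1}}(x,y)^k\,dy = |x|^k \int_{S^{n-1}}(u,y)^k\,dy.
\end{align}
It then remains only to show that the integral on the right is independent of the choice of unit vector $u$.

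This is precisely where the symmetry enters. Fix a reference unit vector $e$ (for concreteness a standard basis vector). Since $u$ and $e$ are both unit vectors, there is an orthogonal matrix $R$ with $R^T u = e$. Substituting $y = Rz$, which maps $S^{n-1}$ onto itself and preserves the surface measure, and using $(u, Rz) = (R^T u, z) = (e,z)$, I obtain
\begin{align}
\int_{S^{n-1}}(u,y)^k\,dy = \int_{S^{n-1}}(u, Rz)^k\,dz = \int_{S^{n-1}}(e, z)^k\,dz.
\end{align}
The right-hand side depends only on $k$, the dimension $n$ being fixed throughout. Defining $C_k$ to be this common value establishes the identity for all $x\neq 0$. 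The degenerate case $x=0$ is then disposed of separately: for $k\geq 1$ both sides vanish, while for $k=0$ both sides equal the total surface measure $|S^{n-1}|$ under the convention $0^0=1$, consistent with taking $C_0 = |S^{n-1}|$.

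I do not expect a genuine obstacle here: the content of the lemma is entirely the homogeneity-plus-symmetry reduction, and the only point requiring care is the bookkeeping in the orthogonal substitution, namely keeping straight whether $R$ or $R^T$ carries $u$ to the reference direction. I would remark that an explicit value of $C_k$ is in fact available --- it vanishes for odd $k$ and reduces to a one-dimensional Beta-type integral for even $k$ --- but this is not needed for the stated claim, which asserts only the factorization and the $x$-independence of the constant.
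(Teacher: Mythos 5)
Your proof is correct and follows essentially the same route as the paper's: factor out $|x|^k$ by homogeneity and reduce to a fixed reference direction, with $C_k := \int_{S^{n-1}}(e_1,y)^k\,dy$. You simply make explicit two points the paper leaves implicit — the orthogonal change of variables justifying rotational invariance, and the degenerate case $x=0$ — which is a welcome but not substantively different elaboration.
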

	\begin{proof}
		WLOG assume $x\neq 0$. 
		\begin{align}
		\int_{S^{n-1}}(x, y)^k dy &= |x|^k\int_{S^{n-1}}\left(\frac{x}{|x|}, y\right)^k dy = |x|^kC_k,
		\end{align}
		where the constant $C_k$ is defined by  $C_k := \int_{S^{n-1}}\left(e_1, y\right)^k dy$.
	\end{proof}
	\begin{lemma}\label{lem:IntQuadratic}
		Let $A$ be a symmetric $n\times n$ matrix. Then
		\begin{align}
		\int_{S^{n-1}}(y, Ay)dy = \text{Tr}(A)C_2,
		\end{align}
		where $C_2$ is as in Lemma \ref{lem:IntInner}.
	\end{lemma}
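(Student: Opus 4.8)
The plan is to exploit the rotational symmetry of the uniform measure on $S^{n-1}$ together with Lemma \ref{lem:IntInner}. Since $A$ is symmetric, the spectral theorem furnishes an orthogonal matrix $Q$ and a diagonal matrix $D=\text{diag}(\lambda_1,\dots,\lambda_n)$ with $A=QDQ^T$. First I would substitute $z=Q^Ty$ in the integral; because $Q$ is orthogonal, this change of variables maps $S^{n-1}$ onto itself and preserves the surface measure, so that $\int_{S^{n-1}}(y,Ay)\,dy=\int_{S^{n-1}}(Q^Ty,DQ^Ty)\,dy=\int_{S^{n-1}}(z,Dz)\,dz$.

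Next, I would expand the diagonalized quadratic form as $(z,Dz)=\sum_{i=1}^n\lambda_i z_i^2=\sum_{i=1}^n\lambda_i(e_i,z)^2$ and integrate term by term. Each summand is handled by Lemma \ref{lem:IntInner} applied with $x=e_i$ and $k=2$: since $|e_i|=1$, it gives $\int_{S^{n-1}}(e_i,z)^2\,dz=C_2$ for every $i$. Summing yields $\sum_i\lambda_i C_2=\bigl(\sum_i\lambda_i\bigr)C_2=\text{Tr}(D)\,C_2$, and invariance of the trace under orthogonal similarity, $\text{Tr}(D)=\text{Tr}(A)$, delivers the claim.

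An equally valid route avoids diagonalization altogether: expanding $(y,Ay)=\sum_{i,j}A_{ij}y_iy_j$ directly, the reflection $y_i\mapsto -y_i$ (which preserves the measure) shows that $\int_{S^{n-1}}y_iy_j\,dy=0$ whenever $i\neq j$, while the common diagonal value $\int_{S^{n-1}}y_i^2\,dy$ equals $C_2$ by Lemma \ref{lem:IntInner}. Only the diagonal entries $A_{ii}$ then survive, again producing $\text{Tr}(A)\,C_2$.

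The computation is essentially routine, so I do not expect a genuine obstacle; the one point requiring care is the justification that $z=Q^Ty$ (or the reflection in the alternative route) leaves the spherical surface measure invariant. This is the standard fact that the uniform measure on $S^{n-1}$ is $O(n)$-invariant, which I would invoke directly rather than re-derive.
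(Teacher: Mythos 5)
Your argument is correct and matches the paper's proof essentially step for step: diagonalize $A$ by the spectral theorem, use the $O(n)$-invariance of the surface measure to reduce to the diagonal case, and apply Lemma \ref{lem:IntInner} with $k=2$ to each eigendirection. The alternative route via coordinate expansion and reflection symmetry is also valid but is not what the paper does; no further changes are needed.
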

	\begin{proof}
		Let $\lambda_1,\dots, \lambda_n$ be the eigenvalues of $A$ and take $U\in\text{O}(n)$ such that $A=U^TDU$, where $D:=\text{diag}\left(\lambda_1,\dots,\lambda_n\right)$.
		\begin{align}
		&\int_{S^{n-1}}(y, Ay)dy = \int_{S^{n-1}}(y, U^TDUy)dy \\
		 &= \int_{S^{n-1}}(Uy, DUy)dy = \int_{S^{n-1}}(y, Dy)dy \\
		 &= \sum_{m=1}^n\lambda_m\int_{S^{n-1}}(y, e_me_m^Ty)dy \\
		&= \sum_{m=1}^n\lambda_m\int_{S^{n-1}}(e_m, y)^2dy = C_2\sum_{m=1}^n\lambda_m = C_2\text{Tr}(A).
		\end{align}
	\end{proof}
	\begin{lemma}\label{lem:GMDeriv}
		The gradient and Hessian of a Gaussian mixture model \footnote{In the interest of readability we have abused notation and absorbed the normalizing constants $C_{\Sigma_m}$ into the weights $a_m$.}
		\begin{align}
		f(x) = \sum_{m}a_m\exp\left\lbrace-\frac{1}{2}\left(x-x_m\right)^T\Sigma_m^{-2}\left(x-x_m\right)\right\rbrace,
		\end{align}
		$a_m>0, \Sigma_m>0,$ are given by
		\begin{align}
		\nabla f &= \sum_{m}a_m\left(-\Sigma_m^{-1}\right)\left(\Sigma_m^{-1}x-\Sigma_m^{-1}x_m\right)\\
		&\qquad\qquad\times \exp\left\lbrace-\frac{1}{2}\left(x-x_m\right)^T\Sigma_m^{-2}\left(x-x_m\right)\right\rbrace \\
		Hf &= \sum_ma_m\left[\left(\Sigma_m^{-2}\left(x-x_m\right)\right)\left(\Sigma_m^{-2}\left(x-x_m\right)\right)^T - \Sigma_m^{-2}\right]\\
		&\qquad\qquad\times\exp\left\lbrace-\frac{1}{2}\left(x-x_m\right)^T\Sigma_m^{-2}\left(x-x_m\right)\right\rbrace
		\end{align}
		
	\end{lemma}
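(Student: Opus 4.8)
The plan is to reduce to a single summand by linearity, and then apply the chain rule to the composition of the exponential with a quadratic form. Since the gradient $\nabla$ and the Hessian $H$ are linear operators, it suffices to differentiate each term $a_m\exp\{\phi_m\}$ separately and sum, where I set
\begin{align}
\phi_m(x) := -\tfrac{1}{2}\left(x-x_m\right)^T\Sigma_m^{-2}\left(x-x_m\right).
\end{align}
So the whole computation boils down to differentiating a scalar exponential of a quadratic form twice.

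First I would record the gradient and Hessian of the quadratic form $\phi_m$ itself. Writing $Q := \Sigma_m^{-2}$ and using that $Q$ is symmetric (because $\Sigma_m$ is symmetric), the standard formulas for the derivatives of a quadratic give $\nabla\phi_m = -Q(x-x_m)$ and $H\phi_m = -Q$. The one point worth stating is that the symmetry of $Q$ is exactly what collapses the generic gradient $-\tfrac{1}{2}(Q+Q^T)(x-x_m)$ into the clean form $-Q(x-x_m)$; without it the later factorizations would not go through.

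Then I would invoke the chain rule in its two relevant forms. For the gradient, $\nabla\exp\{\phi_m\} = \exp\{\phi_m\}\nabla\phi_m = -Q(x-x_m)\exp\{\phi_m\}$, and rewriting $Q(x-x_m) = \Sigma_m^{-1}\left(\Sigma_m^{-1}x-\Sigma_m^{-1}x_m\right)$ recovers the claimed gradient after multiplying by $a_m$ and summing. For the Hessian I would use the composition identity $H\exp\{\phi_m\} = \exp\{\phi_m\}\left(\nabla\phi_m\,(\nabla\phi_m)^T + H\phi_m\right)$, which follows by differentiating the entrywise relation $\partial_i\exp\{\phi_m\} = \exp\{\phi_m\}\,\partial_i\phi_m$ once more in the $j$-th variable. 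Substituting $\nabla\phi_m = -Q(x-x_m)$ and $H\phi_m = -Q$ yields $\exp\{\phi_m\}\left(Q(x-x_m)(x-x_m)^TQ - Q\right)$, and symmetry of $Q$ lets me rewrite $Q(x-x_m)(x-x_m)^TQ = \left(\Sigma_m^{-2}(x-x_m)\right)\left(\Sigma_m^{-2}(x-x_m)\right)^T$, which is precisely the stated expression.

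The computation is entirely routine matrix calculus, so I do not expect a genuine obstacle. The only step that merits being written out rather than asserted is the Hessian-of-a-composition formula $H\exp\{\phi\} = \exp\{\phi\}\left(\nabla\phi\,(\nabla\phi)^T + H\phi\right)$, and the small bookkeeping of using the symmetry of $\Sigma_m^{-2}$ at both the gradient and Hessian stages to match the target form exactly.
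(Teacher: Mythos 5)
Your derivation is correct: the reduction to a single summand by linearity, the formulas $\nabla\phi_m=-\Sigma_m^{-2}(x-x_m)$ and $H\phi_m=-\Sigma_m^{-2}$ for the quadratic form, and the composition identity $H\exp\{\phi\}=\exp\{\phi\}\left(\nabla\phi\,(\nabla\phi)^T+H\phi\right)$ all check out and reproduce the stated expressions exactly, with the symmetry of $\Sigma_m^{-2}$ used correctly at both stages. The paper itself omits the proof as a standard result, so your write-up simply supplies the routine matrix calculus the authors chose not to include; there is no divergence of approach to report.
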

	\begin{proof}
		Since this is a standard result, we omit the straightforward proof.
	\end{proof}
	
	\begin{theorem}\label{thm:modeDist}
		Consider a Gaussian mixture model in $n$ dimensions
		\begin{align}
		f(x) = \sum_{m}a_m\exp\left\lbrace-\frac{1}{2}\left(x-x_m\right)^T\Sigma_m^{-2}\left(x-x_m\right)\right\rbrace,
		\end{align}
		$a_m>0, \Sigma_m>0.$ Let $\sigma_{m,\max}$ and $\sigma_{m,\min}$ be the maximal and minimal eigenvalues of $\Sigma_m$. Let $x'$ be a local maximum of $f$. Then there exists an index $m$ such that
		\begin{align}
		|x'-x_m| \leq \sqrt{n}\sigma_{m,\max}^2\sigma_{m,\min}^{-1}.
		\end{align} 
		
	\end{theorem}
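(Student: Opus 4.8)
The plan is to combine the second-order necessary condition at a local maximum with the averaging identities in Lemmas \ref{lem:IntInner} and \ref{lem:IntQuadratic}. At a local maximum $x'$ the Hessian $Hf(x')$ is negative semidefinite, so $y^T Hf(x')\, y \le 0$ for every $y \in S^{n-1}$; integrating this pointwise inequality over the sphere gives $\int_{S^{n-1}} y^T Hf(x')\, y\, dy \le 0$. The reason to integrate, rather than to test a single direction, is that averaging converts the directional data into a trace, which is precisely what makes a pigeonhole over the components possible.

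First I would substitute the Hessian formula from Lemma \ref{lem:GMDeriv}. Writing $w_m := a_m\exp\{-\tfrac12(x'-x_m)^T\Sigma_m^{-2}(x'-x_m)\} > 0$ and $u_m := x'-x_m$, the integrand is a $w_m$-weighted sum of terms $(y,\Sigma_m^{-2}u_m)^2 - (y,\Sigma_m^{-2}y)$. Applying Lemma \ref{lem:IntInner} with $k=2$ to the first term and Lemma \ref{lem:IntQuadratic} to the second, the integral evaluates, up to the common positive constant $C_2$, to $\sum_m w_m\bigl(|\Sigma_m^{-2}u_m|^2 - \operatorname{Tr}(\Sigma_m^{-2})\bigr)$. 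The necessary condition then reads
\[
\sum_m w_m\left(|\Sigma_m^{-2}u_m|^2 - \operatorname{Tr}(\Sigma_m^{-2})\right) \le 0.
\]
Since every $w_m > 0$, the bracketed quantities cannot all be strictly positive, so there is an index $m$ with $|\Sigma_m^{-2}u_m|^2 \le \operatorname{Tr}(\Sigma_m^{-2})$.

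It then remains to turn this into the stated bound on $|u_m|$ using the spectrum of $\Sigma_m$. Diagonalizing $\Sigma_m$, the quadratic form $|\Sigma_m^{-2}u_m|^2 = u_m^T\Sigma_m^{-4}u_m$ is bounded below by the smallest eigenvalue $\sigma_{m,\max}^{-4}$ of $\Sigma_m^{-4}$ times $|u_m|^2$, while $\operatorname{Tr}(\Sigma_m^{-2}) = \sum_i \sigma_{m,i}^{-2} \le n\,\sigma_{m,\min}^{-2}$. Chaining these two estimates gives $\sigma_{m,\max}^{-4}|u_m|^2 \le n\,\sigma_{m,\min}^{-2}$, and taking square roots yields $|x'-x_m| \le \sqrt{n}\,\sigma_{m,\max}^2\sigma_{m,\min}^{-1}$, as claimed.

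I do not expect a serious obstacle: the argument is short once one commits to integrating the Hessian over the sphere. The only points that demand care are the bookkeeping of which extremal eigenvalue controls each side (the \emph{largest} eigenvalue of $\Sigma_m$ governs the lower bound on $u_m^T\Sigma_m^{-4}u_m$, whereas the \emph{smallest} governs the upper bound on the trace), and the observation that the strict positivity of the weights $w_m$ is exactly what licenses the pigeonhole step. I would also remark that this sphere-integration route is equivalent to simply taking the trace of $Hf(x') \preceq 0$; Lemmas \ref{lem:IntInner} and \ref{lem:IntQuadratic} merely package that averaging cleanly.
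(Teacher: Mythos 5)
Your argument is correct and coincides with the paper's own proof: integrate the negative-semidefiniteness of the Hessian over the unit sphere, apply Lemmas \ref{lem:IntInner} and \ref{lem:IntQuadratic} to reduce to $|\Sigma_m^{-2}(x'-x_m)|^2 \le \operatorname{Tr}(\Sigma_m^{-2})$ for some $m$ via pigeonhole on the positive weights, and finish with the extremal eigenvalue estimates. Your closing remark that this is just the trace of $Hf(x')\preceq 0$ is a fair observation, but the route is otherwise identical.
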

	\begin{proof} 
		Since $x'$ is a local maximum we have $Hf(x') \leq 0$, i.e. $\left(y, Hf(x')y\right)\leq 0, y\in\R^n$. We integrate the last inequality over the unit-sphere and make use of Lemma \ref{lem:GMDeriv} to conclude:
		\begin{align}
		\sum_ma_m\int_{S^{n-1}}y^TA_m(x',y)ydy\exp\left\lbrace\cdots\right\rbrace  \leq 0,
		\end{align}
		where\\
		 $A_m(x',y):= \left[\left(\Sigma_m^{-2}\left(x'-x_m\right)\right)\left(\Sigma_m^{-2}\left(x'-x_m\right)\right)^T - \Sigma_m^{-2}\right]$.
		Hence there exist some index $m$ such that 
		\begin{align}
		\int_{S^{n-1}}y^TA_m(x',y)ydy & \leq 0,
		\end{align}
		which leads to
		\begin{align}
		\int_{S^{n-1}}\left(y,\Sigma_m^{-2}\left(x'-x_m\right)\right)^2dy -\int_{S^{n-1}}\left(y,\Sigma_m^{-2}y\right)dy & \leq 0.
		\end{align}
		At this point we apply Lemma \ref{lem:IntInner} and Lemma \ref{lem:IntQuadratic} and obtain:
		\begin{align}
		C_2\left|\Sigma_m^{-2}\left(x'-x_m\right)\right|^2 -C_2\text{Tr}\left(\Sigma_m^{-2}\right) & \leq 0.
		\end{align}
		Now $C_2> 0$ since $C_2$ is an integral of a continuous non-negative function that is not everywhere zero. Hence
		\begin{align}
		\left|\Sigma_m^{-2}\left(x'-x_m\right)\right|^2 \leq \text{Tr}\left(\Sigma_m^{-2}\right).
		\end{align}
		Note that 
		\begin{align}
		\sigma_{m,\max}^{-4}\left|x'-x_m\right|^2 \leq \left|\Sigma_m^{-2}\left(x'-x_m\right)\right|^2
		\end{align}
		and that
		\begin{align}
		\text{Tr}\left(\Sigma_m^{-2}\right) \leq n\sigma_{m,\min}^{-2}.
		\end{align}
		So
		\begin{align}
		\sigma_{m,\max}^{-4}\left|x'-x_m\right|^2 \leq n\sigma_{m,\min}^{-2},
		\end{align} 
		and the claim of the theorem follows.
	\end{proof}
	
	\begin{cor}\label{cor:sphericalBound}
		Let $f$ be a mixture of spherical Gaussians with common variance $\sigma^2$, i.e.
		\begin{align}
		f(x) = \sum_{m}a_m\exp\left\lbrace-\frac{1}{2\sigma^2}\left|x-x_m\right|^2\right\rbrace, \sigma >0, a_m>0. 
		\end{align}
		If $x'$ is local maximum of $f$, then there is an index $m$ such that 
		\begin{align}
		|x'-x_m| \leq \sqrt{n}\sigma.
		\end{align} 
	\end{cor}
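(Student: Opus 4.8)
The plan is to derive this corollary directly from Theorem \ref{thm:modeDist} by specializing the covariance structure, with essentially no additional work. The key observation is that a spherical Gaussian with common variance $\sigma^2$ corresponds to the choice $\Sigma_m = \sigma I$ for every $m$ in the general anisotropic form treated in the theorem.

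First I would verify that the two forms of $f$ agree under this substitution. With $\Sigma_m = \sigma I$ we have $\Sigma_m^{-2} = \sigma^{-2} I$, so the quadratic form in the exponent satisfies $(x-x_m)^T \Sigma_m^{-2}(x-x_m) = \sigma^{-2}|x-x_m|^2$; multiplying by $-\frac{1}{2}$ recovers exactly the exponent $-\frac{1}{2\sigma^2}|x-x_m|^2$ appearing in the corollary. Hence the spherical mixture is a genuine instance of the GMM in Theorem \ref{thm:modeDist}.

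Next I would read off the eigenvalues entering the theorem's bound. Since $\Sigma_m = \sigma I$ has $\sigma$ as its only eigenvalue (with multiplicity $n$), the maximal and minimal eigenvalues coincide, $\sigma_{m,\max} = \sigma_{m,\min} = \sigma$. Substituting into the conclusion of Theorem \ref{thm:modeDist} gives, for the index $m$ it provides,
\begin{align}
|x'-x_m| \leq \sqrt{n}\,\sigma_{m,\max}^2 \sigma_{m,\min}^{-1} = \sqrt{n}\,\sigma^2 \sigma^{-1} = \sqrt{n}\,\sigma,
\end{align}
which is precisely the claimed inequality.

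I do not anticipate any real obstacle here: the corollary is a one-line consequence of the theorem once the substitution is made. The only point I would take care to confirm is that the common-variance convention $\Sigma_m^2 = \sigma^2 I$ indeed corresponds to the positive square root $\Sigma_m = \sigma I$, so that both extreme eigenvalues equal $\sigma$ and the general bound $\sqrt{n}\,\sigma_{m,\max}^2\sigma_{m,\min}^{-1}$ collapses cleanly to $\sqrt{n}\,\sigma$.
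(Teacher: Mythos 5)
Your proposal is correct and matches the paper's approach exactly: the paper simply states that the corollary is an immediate consequence of Theorem \ref{thm:modeDist}, and your substitution $\Sigma_m = \sigma I$, giving $\sigma_{m,\max} = \sigma_{m,\min} = \sigma$ and collapsing the bound to $\sqrt{n}\sigma$, is precisely the intended one-line specialization.
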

	\begin{proof}
		This is an immediate consequence of Theorem \ref{thm:modeDist}.
	\end{proof}
	\begin{prop}
		The bound in Theorem \ref{thm:modeDist} cannot be improved by a constant, i.e. for any $\delta>0$ there exist a GMM $f$ such that some mode $x'$ of $f$ satisfies 
		\begin{align}
		|x'-x_m| > \sqrt{n}\sigma_{m,\max}^2\sigma_{m,\min}^{-1} - \delta, 
		\end{align} 
		for all mean vectors $x_m$ of $f$.
	\end{prop}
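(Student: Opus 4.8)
The plan is to exhibit an explicit family of mixtures of \emph{spherical} Gaussians for which a mode sits almost exactly at the critical distance from every mean. Since for a spherical Gaussian with variance $\sigma^2$ one has $\sigma_{m,\max}=\sigma_{m,\min}=\sigma$, so that $\sqrt{n}\sigma_{m,\max}^2\sigma_{m,\min}^{-1}=\sqrt{n}\sigma$, it suffices to saturate the bound of Corollary \ref{cor:sphericalBound}. The key observation is that a single collinear pair of Gaussians on one axis would only place a mode at distance $a$ from its means, which falls short of $\sqrt{n}\sigma$ for $n>1$; to recover the full $\sqrt{n}$ factor I would instead use a product construction placing $2^n$ Gaussians at the vertices of a cube.

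Concretely, fix $\sigma>0$ and a parameter $a\in(0,\sigma)$, and set
\begin{align}
h(t) := \exp\!\left\{-\tfrac{1}{2\sigma^2}(t-a)^2\right\} + \exp\!\left\{-\tfrac{1}{2\sigma^2}(t+a)^2\right\}.
\end{align}
A direct one-dimensional computation gives $h'(0)=0$ and $h''(0)=2\sigma^{-2}\left(a^2\sigma^{-2}-1\right)\exp\!\left\{-a^2/(2\sigma^2)\right\}$, which is strictly negative precisely when $a<\sigma$; hence $t=0$ is a strict local maximum of $h$. I would then define
\begin{align}
f(x) := \prod_{i=1}^n h(x_i) = \sum_{s\in\{-1,+1\}^n}\exp\!\left\{-\tfrac{1}{2\sigma^2}\lvert x-as\rvert^2\right\},
\end{align}
which is a GMM of $2^n$ spherical Gaussians with common variance $\sigma^2$, unit weights, and mean vectors $\{as : s\in\{-1,+1\}^n\}$.

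The heart of the argument is to check that $x'=0$ is a local maximum of $f$, and here the product structure does the work. Since each factor $h$ is strictly positive and has a strict local maximum at $0$, for all $x$ in a sufficiently small neighborhood of the origin one has $h(x_i)\le h(0)$ with equality iff $x_i=0$, so that $f(x)=\prod_i h(x_i)\le \prod_i h(0)=f(0)$ locally, making $0$ a strict mode of $f$. Every mean $as$ lies at distance $\lvert as\rvert = a\sqrt{n}$ from this mode.

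Finally I would match $\delta$. If $\delta\ge\sqrt{n}\sigma$ then $\sqrt{n}\sigma-\delta\le 0$ and the strict inequality holds for any admissible $a$, so assume $\delta<\sqrt{n}\sigma$ and choose $a\in(\sigma-\delta/\sqrt{n},\ \sigma)$, a nonempty subinterval of $(0,\sigma)$. Then $a<\sigma$ guarantees that $x'=0$ is a mode, while
\begin{align}
\lvert x'-x_m\rvert = a\sqrt{n} > \sqrt{n}\sigma - \delta = \sqrt{n}\sigma_{m,\max}^2\sigma_{m,\min}^{-1}-\delta
\end{align}
holds simultaneously for every mean $x_m$ of $f$, as required. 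The only genuine subtlety is the one noted at the outset, namely recognizing that the cube (product) construction, rather than a collinear pair, is what is needed to reach the full $\sqrt{n}\sigma$; once that is in place, the local-maximum verification is immediate from positivity of the one-dimensional factors.
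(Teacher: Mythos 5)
Your proof is correct, but it takes a genuinely different route from the paper's. The paper uses only $2n$ spherical components, placed at $\pm(\sqrt{n}\sigma-\epsilon)e_i$ for $i=1,\dots,n$, shows the origin is a critical point by symmetry of $\nabla f$, and then verifies $Hf(0)<0$ directly from the multivariate Hessian formula of Lemma \ref{lem:GMDeriv}: the Hessian collapses to a negative multiple of $I_n$ precisely because $\sigma^{-2}(\sqrt{n}\sigma-\epsilon)^2<n$. Your construction instead places $2^n$ components at the vertices $as$, $s\in\{-1,+1\}^n$, of a cube and exploits the separable product structure $f(x)=\prod_i h(x_i)$ so that the mode verification reduces to a one-variable second-derivative computation plus positivity of the factors; this is elementary and self-contained, at the cost of exponentially many components and the slightly more restrictive per-coordinate condition $a<\sigma$ (versus the paper's single condition that the common distance be below $\sqrt{n}\sigma$). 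Both constructions are spherical, so both establish sharpness of the general bound via the special case $\sqrt{n}\sigma_{m,\max}^2\sigma_{m,\min}^{-1}=\sqrt{n}\sigma$, and your observation that a single collinear pair cannot reach the $\sqrt{n}$ factor (it only allows modes within distance $\sigma$) is exactly the obstruction that both constructions are designed to circumvent, by spreading the means so that the rank-one terms $\sigma^{-4}x_mx_m^T$ jointly cover all $n$ directions. Your handling of the trivial case $\delta\geq\sqrt{n}\sigma$ and the choice $a\in(\sigma-\delta/\sqrt{n},\sigma)$ are both fine.
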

	\begin{proof}
		We explicitly construct a family $\{f_\epsilon\}_\epsilon$ of functions that satisfy the statement of this proposition. For $\epsilon$ such that $\min\left(\delta,\sqrt{n}\sigma\right) > \epsilon>0$ let $f_\epsilon$ be the $2n$ component $n$-dimensional spherical GMM with common variance $\sigma^2$, common amplitude $a$ and with means at $\pm (\sqrt{n}\sigma - \epsilon) e_{i}$, for $i=1,2,\dots,n$.  We shall prove that $f_\epsilon$ has a mode in the origin, and thus it has a mode at distance $\sqrt{n}\sigma - \epsilon$ from the set of means of $f_\epsilon$.\footnote{Numerical experiments suggest that $f_\epsilon$ has a mode in the origin also for $\epsilon=0$. A proof of this (if it is true) would however need an argument different from the one given here, since $Hf_0(0) = 0$.} Lemma \ref{lem:GMDeriv} implies
		\begin{align}
		\nabla f_\epsilon(0) &= a\sum_{m}\sigma^{-2}x_m\exp\left\lbrace-\frac{1}{2\sigma^2}\lvert x_m\rvert^2\right\rbrace \\ 
		&= a\sigma^{-2}\exp\left\lbrace-\frac{\left(\sqrt{n}\sigma-\epsilon\right)^2}{2\sigma^2}\right\rbrace\sum_mx_m. 
		\end{align}
		By symmetry $\sum_{m}x_m=0$, so $f_\epsilon$ has a critical point in the origin. Hence we are done if we prove that $Hf_\epsilon(0) < 0$. Again by Lemma \ref{lem:GMDeriv}:
		\begin{align}
		Hf_\epsilon(0) &= a\sum_m\left(\sigma^{-4}x_mx_m^T - \sigma^{-2}I_n\right)e^{-\frac{1}{2\sigma^2}\lvert x_m\rvert^2} \\
		&= 2ae^{-\frac{\left(\sqrt{n}\sigma-\epsilon\right)^2}{2\sigma^2}}\sum_{i=1}^n\left(\sigma^{-4}(\sqrt{n}\sigma - \epsilon)^2 e_{i} e_{i}^T - \sigma^{-2}I_n\right) \\
		&= \underbrace{2ae^{-\frac{\left(\sqrt{n}\sigma-\epsilon\right)^2}{2\sigma^2}}\sigma^{-2}\left[\sigma^{-2}(\sqrt{n}\sigma - \epsilon)^2- n\right]}_{<0}I_n.
		\end{align} 
		Hence $Hf_\epsilon(0) < 0$.
	\end{proof}
\subsection{Function approximation by Gaussian mixtures}
The ability of GMMs to approximate functions in $L^p$ spaces has been investigated previously, see e.g. \cite{nestoridis2011universal} where it is noted that any probability density function may be approximated in the sense of $L^1$ by GMMs. For completeness, we here give a proof of the density of GMMs in the $L^\infty$-norm. The proof relies on the machinery of \emph{quasi interpolants} \cite{mazia2007approximate}.
\begin{prop}\label{prop:GaussApproximation}
Let $u$ be a twice differentiable and compactly supported non-negative function on $\R^n$ such that $u$ and all its partial derivatives up to order two are bounded. Then $u$ may be approximated in $L^\infty$ arbitrarily well by Gaussian mixtures.
\end{prop}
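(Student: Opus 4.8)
The plan is to construct the approximating mixture explicitly as a \emph{quasi-interpolant} in the sense of Maz'ya and Schmidt \cite{mazia2007approximate}, taking the Gaussian itself as the generating function, and then to balance the two distinct sources of error that such an approximation incurs. Concretely, fix the generating function $\eta(x) := \pi^{-n/2}e^{-|x|^2}$, a Gaussian with $\int_{\R^n}\eta = 1$ and vanishing first moments. For a grid-spacing $h>0$ and a scale parameter $\mathcal D>0$ I would set
\[
u_{h,\mathcal D}(x) := \mathcal D^{-n/2}\sum_{m\in\mathbb Z^n} u(hm)\,\eta\!\left(\frac{x-hm}{\sqrt{\mathcal D}\,h}\right).
\]
Each summand is a Gaussian centered at $hm$ with covariance proportional to $\mathcal D h^2 I_n$, and the assumption $u\ge 0$ makes every weight $\mathcal D^{-n/2}u(hm)$ non-negative, so $u_{h,\mathcal D}$ is a bona fide GMM in the sense of Section \ref{sec:probStatement}. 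The compact support of $u$ guarantees that only finitely many weights are non-zero, so the sum is finite and the non-zero weights are strictly positive.

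Next I would invoke the error analysis of approximate approximations. Expanding $u$ to second order around $x$, using the moment conditions on $\eta$, and applying Poisson summation to the lattice sum, one obtains a bound of the form
\[
\bigl|u(x)-u_{h,\mathcal D}(x)\bigr| \le c_1\,\mathcal D\,h^2\,\sup_{y}\lVert \nabla^2 u(y)\rVert + \varepsilon(\mathcal D)\,\sup_y|u(y)|,
\]
valid uniformly in $x$, where $c_1$ depends only on $\eta$ and $n$, and where the \emph{saturation term} $\varepsilon(\mathcal D)$ collects the contributions of the non-zero dual-lattice frequencies. Since $\widehat\eta$ is again a Gaussian, $\widehat\eta(\sqrt{\mathcal D}\,\nu)$ decays like $e^{-\pi^2\mathcal D|\nu|^2}$, so $\varepsilon(\mathcal D)\to 0$ exponentially as $\mathcal D\to\infty$. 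The hypotheses that $u$ and its derivatives up to order two are bounded are exactly what render both terms finite.

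The approximation is then produced by choosing the parameters in the correct order. Given a target accuracy $\epsilon>0$, I would \emph{first} fix $\mathcal D$ large enough that $\varepsilon(\mathcal D)\sup|u| < \epsilon/2$; this is possible precisely because the saturation error is independent of $h$. Only \emph{then}, with $\mathcal D$ held fixed, would I take $h$ small enough that $c_1\,\mathcal D\,h^2\,\sup\lVert\nabla^2 u\rVert < \epsilon/2$. The resulting $u_{h,\mathcal D}$ is a finite Gaussian mixture with $\lVert u - u_{h,\mathcal D}\rVert_{L^\infty} < \epsilon$, which proves the claim.

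The main obstacle is establishing the error estimate itself, and in particular isolating the saturation term and controlling it by $\varepsilon(\mathcal D)$ \emph{uniformly in} $h$; this is where the machinery of \cite{mazia2007approximate} does the essential work. One might naively hope that $u_{h,\mathcal D}\to u$ as $h\to 0$ for a fixed $\mathcal D$, but this fails: the quasi-interpolant saturates at the level $\varepsilon(\mathcal D)$. Consequently the two parameters cannot be sent to their limits simultaneously, and the order of the two steps above is genuinely necessary rather than a matter of presentation.
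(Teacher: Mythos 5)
Your proposal is correct and follows essentially the same route as the paper: both construct the Gaussian quasi-interpolant of Maz'ya--Schmidt, observe that non-negativity of $u$ and compact support make it a genuine finite GMM, and then choose the parameters in the same order --- first $\mathcal{D}$ large to beat the saturation error, then $h$ small to kill the $O(\mathcal{D}h^2)$ term. The only cosmetic difference is that the paper's quoted bound carries an extra $\sqrt{\mathcal{D}}h\lvert\nabla u\rvert_\infty$ term inside the saturation part, which is likewise absorbed when $h\to 0$ with $\mathcal{D}$ fixed, so nothing is missing.
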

\begin{proof}
 Let $g = g(0, \Sigma)$ be an $n$-dimensional centred Gaussian density function with non-degenerate covariance matrix $\Sigma^2$. As is noted on page 50 in \cite{mazia2007approximate}, $g$ generates an approximate quasi interpolant of order 2. This follows from the observation that $f$ satisfies both Condition 2.15 in \cite[p. 33]{mazia2007approximate}, with moment order $N=2$, and Condition 2.12 in \cite[p. 32]{mazia2007approximate}, with decay order $K$, for all $\mathbb{Z}\ni K > n$. Hence Theorem 2.17 in \cite[p. 35]{mazia2007approximate} is applicable and implies that for any twice differentiable real-valued function $u$ on $\R^n$ such that $u$ and all its partial derivatives up to order two are bounded, one has that for any $\epsilon>0$ that there exists a $\mathcal{D} = \mathcal{D}(\epsilon)>0$ such that for all~h, $\left\lvert u - \mathcal{M}_{h,\mathcal{D}}u\right\lvert_\infty$ is bounded from the above by
\begin{align}
  &c \mathcal{D}h^2\max_{|\alpha|=2}\lvert \partial^\alpha u\rvert_\infty + \epsilon \left(\lvert u\rvert_\infty + \sqrt{\mathcal{D}}h\lvert\nabla u\rvert_\infty\right) \\
&\leq A \left(c\mathcal{D}h^2+ \epsilon + \sqrt{\mathcal{D}}h\right),
\end{align}
where $A:= \sqrt{n}\max_{0 \leq |\alpha| \leq 2} \lvert\partial^\alpha u\rvert_\infty$, $c$ is a constant independent of $u, h$ and $\mathcal{D}$ and the quasi interpolant $\mathcal{M}_{h,\mathcal{D}}u(x)$ is defined by
\begin{align}\label{eq:QuasiDef}
\mathcal{M}_{h,\mathcal{D}}u(x) := \mathcal{D}^{-n/2}\sum_{m\in\mathbb{Z}^n}u(hm)g\left(\frac{x-hm}{\sqrt{\mathcal{D}}h}\right).
\end{align}
Note that the weights $u(hm)$ are non-negative since $u$ is non-negative, and that the compact support of $u$ allows us to restrict the domain of summation in \eqref{eq:QuasiDef} to a finite subset of $\mathbb{Z}^n$. Hence $\mathcal{M}_{h,\mathcal{D}}u(x)$ is a GMM. Now for a given $\epsilon>0$ we let $\epsilon' := \frac{\epsilon}{2A}$ and pick $\mathcal{D}>0$ such that:
\begin{align}
\left\lvert u - \mathcal{M}_{h,\mathcal{D}}u\right\lvert_\infty 
& \leq A \left(c\mathcal{D}h^2+ \epsilon' + \sqrt{\mathcal{D}}h\right)\\ 
&=  \epsilon/2 + A \left(c\mathcal{D}h^2 + \sqrt{\mathcal{D}}h\right).
\end{align}
Next we take $h$ small enough, so that $A \left(c\mathcal{D}h^2 + \sqrt{\mathcal{D}}h\right) \leq \epsilon/2$, and conclude that
\begin{align}
\left\lvert u - \mathcal{M}_{h,\mathcal{D}}u\right\lvert_\infty 
&\leq \epsilon.
\end{align}
\end{proof}
\section{Conclusion} \label{sec:conc}
Motivated primarily by applications in TEM, we have developed a new algorithm for decomposing a non-negative multivariate signal as a sum of Gaussians with full covariances. We have tested it on 1D and 2D data. Moreover, we have also proved an upper bound for the distance from a local maximum of a GMM to the set of its mean vectors. This upper bound provides motivation for a key step in our method, namely the  initialization of each new Gaussian at the maximum of the residual.  Finally we remark that, while we have only tested the proposed method on functions sampled on uniform grids, it is straightforward to extend the method to handle input data in the form of multivariate functions sampled on \emph{non-uniform} grids.

	\newpage
	\bibliographystyle{alpha}
	\bibliography{gmm_paper_preprint}

\end{document}